\definecolor{Highlight}{HTML}{39b54a}  
\newcommand{\core }{CoreFed\xspace}
\title{Fairness in Federated Learning via Core-Stability}
\author{%
    Bhaskar Ray Chaudhury \quad Linyi Li \quad Mintong Kang \quad Bo Li \quad Ruta Mehta \\
    \textbf{University of Illinois at Urbana Champaign} 
}
\begin{document}

\maketitle

\begin{abstract}

Federated learning provides an effective paradigm to jointly optimize a  model benefited from rich distributed data while protecting data privacy. Nonetheless, the heterogeneity nature of distributed data, especially in the non-IID setting, makes it challenging to define and ensure fairness among local agents. For instance, it is intuitively ``unfair" for agents with data of high quality to sacrifice their  performance due to other agents with low quality data. Currently popular \emph{egalitarian} and \emph{weighted equity-based} fairness measures suffer from the aforementioned pitfall. In this work, we aim to formally represent this problem and address these fairness issues using concepts from co-operative game theory and social choice theory. We model the task of learning a shared predictor in the federated setting as a {\em fair public decision making} problem, and then define the notion of {\em core-stable fairness}: Given $N$ agents, there is no subset of agents $S$ that can benefit significantly by forming a coalition among themselves based on their utilities $U_N$ and $U_S$ (i.e., $\frac{|S|}{N} U_S \geq U_N$). Core-stable predictors are robust to low quality local data from some agents, and additionally they satisfy \emph{Proportionality} (each agent gets at least $1/n$ fraction of the best utility that she can get from any predictor) and Pareto-optimality (there exists no model that can increase the utility of an agent without decreasing the utility of another), two well sought-after fairness and efficiency notions within social choice. We then propose an efficient federated learning protocol \core to optimize a core stable predictor. \core determines a core-stable predictor when the loss functions of the agents are convex.
\core also determines approximate core-stable predictors when the loss functions are not convex, like smooth neural networks. We further show the existence of core-stable predictors in more general settings using Kakutani's fixed point theorem. Finally, we empirically validate our analysis on two real-world datasets, and we show that \core achieves higher core-stable fairness than FedAvg while maintaining similar accuracy. 
\end{abstract}

\section{Introduction}
\label{intro}
The success of many deployed machine learning (ML) systems crucially hinges on the availability of high-quality data. However, a single entity might not own all the data it needs
to train the ML model it wants; instead, valuable data
instances or features might be scattered in different
organizations or entities. Distributed learning schemes such as federated learning (FL)~\cite{konevcny2016federated}
provide a training scheme that focuses on training
a single ML model using all the data available in a
cooperative way without moving the training data across the organizational or personal boundaries to protect data privacy.

On the other hand, given the heterogeneity in the local data distributions of different clients participating in FL, it has become quite challenging to design a classifier that performs reasonably across all of them. In fact, such an objective directly transfers to ensuring a fair performance of the classifier across all local data distributions. Therefore, fairness in FL has attracted substantial interest in the recent past~\cite{XuL20, mohri2019agnostic, LiSBS20, huang2022fairness, ZengCL21}.

In this work, we ask: \textit{Is it possible to jointly optimize a centralized model with fairness guarantees regarding the heterogeneity of local agents?
How to define such fairness such that no agents would intend to form an alternative coalition with a subset of agents?
What could be the federated learning protocol that is able to ensure such fairness?}


To address the above research questions, we bring to bear notions from game theory and social choice theory. We first observe that 
federated learning can be cast into \emph{public decision making}, where all agents derive their respective utilities from a common global decision, namely the globally learned model. Now the goal is to make this global decision fairly. One of the fundamental fairness measures in public decision making is that of core-stability~\cite{muench1972core}. Intuitively, we say that a set of agents can form a ``blocking coalition'' if each one of them can gain utility significantly (proportional to the size of their coalition) by training a unified model amongst themselves than the globally trained model. A globally trained model is core stable if there are no blocking coalitions. 

We briefly justify the advantages of core-stability {(a.k.a. core-stable fairness)} over some of the existing notions of fairness in federated learning. Two commonly used fairness notions in federated learning  are the \emph{egalitarian fairness}~\cite{XuL20, mohri2019agnostic, DuXWT21, ZengCL21, ZhangKLWWLSX21, Papadaki21, mohri2019agnostic} and \emph{proportional fairness}~\cite{DonahueK21, DonahueKModel21}.  The egalitarian fairness aims to maximize the utility of the least happy agent\footnote{Equivalently maximize the minimum loss.}. In a proportional fairness, we want the ratios of the losses of any pair of the agents to be (super/ sub) proportional to the size of their respective datasets (this incentivizes the agents to share more of their data with the server). To avoid naming conflicts with our notion of \emph{proportionality}, from here on, we refer to the proportional fairness introduced in~\cite{DonahueK21} as \emph{weighted equity based fairness} as this fairness compares the losses of every pair of agents. Usually, fairness notions that compare the utilities/ losses of agents with each other are called equity based fairness in social choice theory. We remark that both the notions are vulnerable {if some agents have poor quality datasets}. 
In particular, if one of the agents have high levels of noise in their data, call them noisy-agent, then their loss values will tend to be higher for most learnt predictors. The egalitarian fairness and the weighted equity based fairness may be unfair to the other agents as both may make decisions aiming to reduce the large loss incurred by the noisy-agent, thereby biasing the learning towards the data of the noisy-agent. A more desirable fairness property in this scenario maybe to compare the \emph{loss percentage} of agents, i.e., the ratio of the loss incurred to the maximum loss that can be incurred, or equivalently \emph{utility percentage} of agents, i.e., the ratio of the utility incurred to the maximum utility that can be incurred. Core-stability achieves this, together with other desirable properties (elaborated in Section~\ref{Problemstatement}).    


\paragraph{Our Contribution.}
We formally define the \textit{core-stable fairness} in federated learning by appropriately modeling agent's utility functions to capture their learning loss error. In particular, given a group of $N$ local agents, an aggregation protocol $P$, and an aggregated model $f$, we say that the model $f$ achieves core-stability if there are no coalition $S$ of agents that could benefit significantly by training a model with only their data (see Definition \ref{core-stability}). Intuitively, this means that under a core-stable FL model, no agent has the incentive to deviate from current group and thereby obtain proportionally better aggregate utility from the final trained model. Additionally, we note that such a model $f$ will ensure sought-after guarantees of Proportionality (each agent gets $1/n$ times their best possible utility)\cite{Steinhaus48} and Pareto-optimality (there is no predictor that can increase the utility of any agent without decreasing the utility of another agent) that equal-treatment based models~\cite{DonahueK21, XuL20} may fail to.

Core-stability is a well-sought-after but a rare-to-exist notion. In case of {\em public goods} that resembles FL, existence of core-stable outcome was known only when agent's utility functions are linear \cite{FainM018}. While the utility functions that capture learning errors are inherently non-linear and highly complex making existing results inapplicable. We summarize our \textit{contributions} as below.

\begin{itemize}[noitemsep,leftmargin=*]

\item We formally extend core-stability from co-operative game theory to fairness in federated learning.  We show that core-stability exists (in Section \ref{existencecorestable}) as long as agent's utility functions are continuous with respect to the model parameters, and their non-negative conical combinations have a convex set of (local) optima. We prove this result using a fixed point formulation. In particular, we define a correspondence $\phi :P \rightarrow P$ on the set of all feasible predictors $P$, and ensure that any predictor $\theta^*\in P$ such that $\theta^*\in \phi(\theta^*)$ is core-stable. Thereafter we show that $\phi$ satisfies nice continuity like properties and therefore must admit a fixed point by  Kakutani's fixed-point theorem \cite{kakutani1941}.

\item Next, we design an effective federated learning protocol \core, which optimizes the final model by maximizing the protocol of agent's utilities. We prove that this protocol efficiently finds the core-stable model whenever the underlying utility functions are concave (see Section \ref{corestableconvex}). Our protocol only needs gradient information from agents in each round. 

\item We prove that above method directly applies to learning through linear regression or logistic regression, since their resulting utility functions are convex (see Section \ref{linregressioncore}). For {\em Smooth Neural Nets (DNN)}, although the utility functions are (highly) non-convex, we manage to show that an approximate core-stable model can be learned within a local neighborhood (see Section \ref{dnncore}). 

\item {To capture cases where agents may have varying importance, we extend core-stability to {\em weighted} core-stability (in Section \ref{weightedcore}). We show that a {\em weighted} core-stable model is {\em weighted} proportional and Pareto-optimal, and that \core protocol can be generalized to Weighted~\core to get the desired weighted guarantees.}  

\item We conduct experiments on three datasets, and show that \core achieves the core-stable fairness, while maintaining similar utility with the standard FedAvg protocol (see Section \ref{experiments}).
\end{itemize}


\section{Related Work}

\paragraph{Fairness in Social Choice.} Fairness is one of the fundamental goals in many multi-agent settings. Over the years, motivated by applications, several notions of fairness have been proposed and investigated. Two fairness notions that are studied in many applications are that of \emph{proportionality}~\cite{Steinhaus48} and \emph{envy-freeness}~\cite{foley1966resource}. Proportionality requires every agent to receive their proportional share of the best outcome, i.e., at least $1/n$ times their best possible utility. The notion of envy-freeness is defined in the context of resource allocation, where one aims to divide a set of items among agents fairly. In an envy-free allocation, no agent prefers the bundle of the other agent to her own. However, in \emph{public decision making}, where all agents derive utility from a common global decision, this notion is not applicable! In public decision making, one of the most sought out fairness notion is that of \emph{core-stability}~\cite{muench1972core}. Core-stability generalizes the notion of proportionality alongside other desirable properties like Pareto-optimality. The concepts of Core-stability find applications in many other settings in social choice and game theory and is well known to exist in some special settings~\cite{core-wiki}
Another popular fairness notion is \emph{equitability} which states that every agent should be equally happy, i.e., the utilities/ losses of all the agents should be the same. However, as explained in~\ref{intro}, using relaxations of equitability, may lead to undesirable outcomes if some agents have poor data quality. Over the last seven decades, several relaxations of envy-freeness~\cite{LiptonMMS04, TimPlaut18}, proportionality~\cite{budish2011combinatorial, Moulin19}  and equitability~\cite{GourvesMT14} have been studied in computational social choice.

\paragraph{Fairness in Federated Learning.} There have been several results on fairness in federated learning, each focusing on a particular aspect of the entire paradigm. For instance some work~\cite{HuangLWHLZ21, yang2021federated} aim to establish fairness at the agent selection phase, where the server requests for updates from a selected subset of the agents. There are studies that aim to study fairness while training the global model such that it does not discriminate against protected groups~\cite{ZafarVGG17} or that the model does not overfit the data of some agents at the expense of others~\cite{LiSBS20, mohri2019agnostic}. Earlier mentioned \emph{egalitarian fairness} will fall under this category. Then, there are studies that consider fairness by evaluating the contribution of the agents towards training the joint model-- for instance \emph{weighted equity fairness}~\cite{DonahueK21} does this based on the size of the data shared by the agents. Other studies assign significance to the agents based on Shapley values~\cite{SongYS}. For a full taxanomy of fairness in federated learning, we urge the reader to check~\cite{ShiYL21}. At large, most of the fairness notions are incomparable. As remarked in~\cite{DonahueK21}, ``no one set of definitions is going to resolve the complex questions it raises''. 

\section{Core-Stability in Federated Learning}
\label{Problemstatement}

\paragraph{Problem Setup.} In any \emph{predictive modelling} task, one would like to learn a function mapping from $\gX \subseteq \sR^d$ to $\gY \subseteq \sR$.
This includes both \emph{regression} and binary \emph{classification} where extension to multi-class classification is also feasible. We denote the space of such mappings as $\gF = \{f_{\theta} \mid \theta \in P \subseteq \mathbb{R}^d\}$, where each $f_{\theta} \colon \mathbb{R}^d \rightarrow \mathbb{R}$ is a mapping function parameterized by the model weights vector $\theta$. Our goal is to determine $f_{\theta} \in \mathcal F$, such that for data sample $(x,y)$ drawn from the distribution $\gP$, $f_\theta(x)\approx y$, i.e., $f_\theta(x)$ learns $y$ well.
Since we identify a mapping function with each $\theta \in P$, we refer to $\theta$ as a \emph{predictor} for the model \footnote{each $\theta$ is a predictor}.

\paragraph{Utility Functions of the Agents.} 
The quality of a predictor $\theta$ is usually measured by the expected loss over the data distribution $\gP$, i.e., $\E_{(x,y)\sim\gP} \ell(f_\theta(x), y)$, where $\ell(\cdot, \cdot)$ is a predefined loss function.
Ideally, the training process would minimize this expected loss, i.e., attain $\theta^\star = \argmin_{\theta\in P} \E_{(x,y)\sim\gP} \ell(f_\theta(x), y)$.
Since we are trying to determine a single predictor for several heterogeneous agents/ groups, we may not be able to give every group its best predictor. However, we want to choose the predictor that achieves \emph{fairness} across all the groups. To define any notion of fairness from the classical economics literature, we need to define the \textbf{utility function} of a group for a predictor $\theta$. Intuitively, the utility is a measure of how good the predictor is for the group and its data. We define,
\begin{equation}
    u(\theta) = M - \E_{(x,y)\sim\gP} \ell(f_\theta(x), y)
    \label{eq:utility}
\end{equation}
where $M$ is a constant more than  $(1+\varepsilon)$ times the loss incurred from the worst predictor for agent $i$, i.e., 
$M \ge (1+\varepsilon)\sup_{\theta\in P, (x,y)\in\gX\times\gY} \ell(f_\theta(x), y)$. The scaling by $(1+\varepsilon)$ is to avoid unnecessary degeneracies resulting from zero utilities, and  we choose $\varepsilon \ll 10^{-5}$. 
Observe that the range of the utility function is from $0 < M \varepsilon$ to $M(1+\varepsilon)$.

\paragraph{Federated Learning and Fairness.}
In the federated learning setting, we are given $n$ groups.  Each group $i$ has their loss function $\ell_i()$ and correspondingly a utility function $u_i()$ for each choice of a predictor $\theta \in P$. We now define the fairness criterion. Recall that given $n$ groups, our goal is to choose a $\theta \in P$, such that we are fair to all the involved agents. The fairness notion here is \emph{core-stability.}

\begin{definition}[Core-Stability]
\label{core-stability}
A predictor $\theta \in P$, is called core stable if there exists no other $\theta' \in P$, and no subset $S$ of agents, such that $\frac{|S|}{n} u_i(\theta') \geq u_i(\theta)$ for all $i \in S$, with at least one strict inequality.
\end{definition}

Intuitively, core-stability implies that there is no subset of agents that can benefit ``significantly''  by forming a coalition among themselves, i.e., if we were to choose any other $\theta' \in P$ only considering the utility functions of the agents in  $S \subseteq n$, then there is some agent who's (multiplicative) gain in utility will be strictly less than a factor $n/ |S|$, i.e., there is no substantial benefit for this agent if she chooses to belong to the set $S$. Furthermore, \emph{core-stability} gives some classical fairness and welfare guarantees. In particular, note that every agent $i$ gets at least $1/n$ times her best utility, i.e., the utility derived from the best possible mapping for agent $i$. Mathematically $u_i(\theta) \geq 1/n \cdot u_i(\theta')$ for all $\theta' \in P$ (setting $S = \{i\}$ in Definition~\ref{core-stability}). This fairness is called \emph{Proportionality} \cite{Steinhaus48}. Formally,

\begin{definition}[Proportionality]
\label{proportionality}
A predictor $\theta \in P$ is proportional if and only if for all $\theta' \in P$, we have $u_i(\theta) \geq \frac{u_i(\theta')}{n}$ for all $i \in [n]$.
\end{definition}

Similarly, observe that there exists no predictor $\theta' \in P$ where $\sum_{i \in [n]} \frac{u_i(\theta')}{u_i(\theta)} > n$ (setting $S = [n]$ in Definition~\ref{core-stability}). This implies that there is no predictor that can increase the utility of some agent without decreasing the utility of another. We call this property \emph{Pareto-optimality}. Formally,

\begin{definition}[Pareto-optimality]
\label{PO}
A predictor $\theta \in P$ is Pareto-optimal if and only if there exists no other $\theta' \in P$, such that $u_i(\theta') \geq u_i(\theta)$ for all $i \in [n]$ with at least one strict inequality.
\end{definition}

Core is a central concept within cooperative game theory, defined to capture ``no deviating sub-group'' property and is considered very strong. However, it is well known to exist only in special cases \cite{core-wiki}. We now elaborate the advantages of core-stability over some of the existing fairness concepts in federated learning.

\subsection{Advantages of Core-Stability}
As briefly mentioned in the introduction, core-stability is robust to low local data qualities of some agents, unlike the FedAvg or federated learning based on egalitarian  or weighted equity based fairness. We elaborate this with a small example: consider three agents that contribute equal amount of data, and say agent 3 has poor data quality, i.e., there exists no proper predictor for agent 3's data, or equivalently for all predictors $\theta \in P$, the loss function of this agent is very high (and utility is very low). Concretely, consider two predictors $\theta_1$ and $\theta_2$. Under $\theta_1$, agents 1 and 2 incur a loss of $a$ and agent 3 incurs a loss of $M \cdot a$ (think of $M$ as a very large integer). Now, under $\theta_2$, agents 1 and 2 have a loss of $10a$ and agent 3 has a loss of $0.9Ma$. Observe that $\theta_2$ is preferable over $\theta_1$ under egalitarian fairness (as $0.9 Ma \gg 10a$) and also by FedAvg as it has a lower total average loss ($0.1 Ma \gg 9a$). Similarly, a learning algorithm based on weighted equity fairness would prefer $\theta_2$, as ratio of the losses between agents 1 (or 2) and 3 is significantly high in both $\theta_1$ and $\theta_2$ and is lesser in $\theta_2$. However, intuitively, $\theta_1$ seems fairer, as agent 3 is not substantially worse off in $\theta_1$ than it is in  $\theta_2$ (by a factor $1.1$), while agents 1 and 2 are significantly better off in $\theta_1$ (by a factor $10$). Note that in this example $\theta_2$ is not a core-stable predictor, as agents 1 and 2 have an incentive to break off and improve substantially (intuitively $\theta_2$ is very unfair to them). We say that core-stable predictors are robust to low data quality of specific agents, as we never compare the losses of two agents with each other; rather our comparison is more along the lines of \emph{loss percentages}, i.e., the ratio of the loss to the maximum possible loss incurred by the agent. 

The robustness to poor local data quality of agents of core-stable predictors is a parallel to the property of \emph{scale-invariance} that core-stable allocations exhibit in social choice theory. In particular, scaling the utility of any single agent does not alter the core-stable allocation.  Similarly, Egalitarian, utilitarian\footnote{This is the parallel to FedAvg in social choice theory.} and equity based fairness suffer from being responsive to scaling~\cite{Aziz19}.

\section{Core-Stability in Federated Learning}
\label{corestability}
In this section, we  prove the existence of core-stability under certain assumptions on the loss functions of the individual agents/ groups (Section~\ref{existencecorestable}). Then, in Section~\ref{corestableconvex}, we give a distributive training protocol \core  to determine a core-stable predictor when the loss functions are convex\footnote{The assumptions made in Section~\ref{existencecorestable} to show only existence of core-stability are weaker than the convexity assumptions in Section~\ref{corestableconvex}.}. Finally, by applying the theory developed in Sections~\ref{existencecorestable} and~\ref{corestableconvex}, we show that \core  determines a core stable predictor in Linear Regression, and in Classification with Logistic Regression (Section~\ref{linregressioncore}). Finally, in Section~\ref{dnncore} we show that \core  determines an approximate core stable predictor in Deep Neural Networks.

\subsection{Existence of Core-Stability in Federated Learning}
\label{existencecorestable}

We show that core stable predictors exist in the federated setting if the utility functions of the agents satisfy the following conditions: 
\begin{enumerate}
    \item The utility function of each agent is continuous.
    \item The set of maximizers of any conical combination of the utility functions is convex i.e., for all $\langle \alpha_1, \alpha_2, \dots , \alpha_n \rangle \in \mathbb{R}^n_{\geq 0}$, the set  $ C= \{ \theta \mid \sum_{i \in [n]} \alpha_i u_i (\theta) \text{ is maximum }\}$ is convex.
\end{enumerate}

To the best of our knowledge, prior to this work, the existence of core-stability in public fair division was shown only for linear utility functions by~\cite{FainM018}.  Utility functions that satisfy the above two conditions cover several other utility functions and is therefore a strict generalization of linear utility functions.  We show the existence of core-stability for instances satisfying conditions 1 and 2 above using Kakutani's fixed point theorem. For completeness, we state the Kakutani's fixed point theorem.

\begin{definition}
\label{kakutani}[Kakutani's Fixed Point Theorem]
A \emph{correspondence} or equivalently a \emph{set valued function} $\phi \colon D \rightarrow 2^D$ admits a fixed point, .i.e., there exists a point $d \in D$, such that $d \in \phi(d)$, if 
\begin{enumerate}
    \item $D$ is  non-empty, compact, and convex.
    \item For all $d \in D$, $\phi(d)$ is non-empty, convex and compact.
    \item $\phi()$ has a closed graph, i.e., for all sequences $(d_i)_{i \in \mathbb{N}}$ converging to $d^*$ and $(e_i)_{i \in \mathbb{N}}$ converging to $e^*$, such that $d_i \in D$ and $e_i \in \phi(d_i)$, we have $e^* \in \phi(d^*)$.
\end{enumerate}
\end{definition}

We define a {correspondence} or a {set valued function} $\phi \colon P \rightarrow 2^P$ where $P$ is the set of all feasible predictors. In particular, for all $\theta \in P$, we set 
\[\phi(\theta) =  \begin{cases} 
      \{ d \mid \sum_{i \in [n]} \frac{u_i(d)}{u_i(\theta)} \text{ is maximum } \}
   \end{cases}
\]
We first observe that any fixed point of $\phi$ corresponds to a core stable classifier.
\begin{lemma}
\label{fixedpointtocore}
Let $\theta \in P$ be such that $\theta \in \phi(\theta)$. Then, $\theta$ is a core-stable predictor.
\end{lemma}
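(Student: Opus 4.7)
The plan is to prove the contrapositive: if $\theta$ is not core-stable, then $\theta \notin \phi(\theta)$. The key observation is that $\theta \in \phi(\theta)$ gives a clean upper bound on the objective $\sum_{i \in [n]} u_i(d)/u_i(\theta)$. Indeed, substituting $d = \theta$ into this objective yields exactly $n$, so $\theta$ being a maximizer forces
\[
\sum_{i \in [n]} \frac{u_i(d)}{u_i(\theta)} \;\le\; n \qquad \text{for all } d \in P.
\]
This inequality is the single structural fact I will exploit. Note that the utilities defined in Eq.~\eqref{eq:utility} take values in the strictly positive range $(M\varepsilon,\,M(1+\varepsilon))$, so all ratios $u_i(d)/u_i(\theta)$ are well-defined and positive; this positivity is essential in what follows.

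Now suppose toward contradiction that $\theta$ is not core-stable, so there exist a subset $S \subseteq [n]$ and a predictor $\theta' \in P$ with $\tfrac{|S|}{n} u_i(\theta') \ge u_i(\theta)$ for every $i \in S$, and strict inequality for at least one $i \in S$. Rewriting, this says $u_i(\theta')/u_i(\theta) \ge n/|S|$ for all $i \in S$, with at least one strict inequality. Summing over $i \in S$ gives
\[
\sum_{i \in S} \frac{u_i(\theta')}{u_i(\theta)} \;>\; |S| \cdot \frac{n}{|S|} \;=\; n.
\]

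Since $u_i(\theta')/u_i(\theta) > 0$ for every $i \notin S$ as well, extending the sum to all of $[n]$ can only increase it, so $\sum_{i \in [n]} u_i(\theta')/u_i(\theta) > n$. This contradicts the upper bound established in the first paragraph, completing the proof.

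There is no real obstacle here; the lemma is essentially a bookkeeping statement that the map $\phi$ was defined precisely so that its fixed points encode the ``no blocking coalition'' condition. The only subtle point is to remember that utilities are uniformly bounded below by $M\varepsilon > 0$ (by construction in Eq.~\eqref{eq:utility}), which is what lets us freely add the positive terms indexed by $[n] \setminus S$ without worrying about division-by-zero or sign issues.
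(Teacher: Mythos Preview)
Your proof is correct and follows essentially the same approach as the paper's own argument: assume a blocking coalition exists, sum the ratio inequalities over $S$ to exceed $n$, extend to all of $[n]$ by positivity of the utilities, and contradict the fact that $\theta \in \phi(\theta)$ forces $\sum_{i\in[n]} u_i(d)/u_i(\theta) \le n$ for all $d$. If anything, you are slightly more careful than the paper in making explicit why the extension from $S$ to $[n]$ is valid (the uniform lower bound $u_i > M\varepsilon > 0$).
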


The proof of Lemma~\ref{fixedpointtocore} can be found in the Appendix. Now, it suffices to show that $\phi$ admits a fixed point. In particular, note that the domain of $\phi$, $P$ is non-empty, compact,   and convex. Similarly, for every $\theta \in P$, $\phi(\theta)$ is non-empty, compact,  and convex. By Kakutani's fixed point theorem, it only remains to show that $\phi()$ has a closed graph, to ensure that $\phi()$ admits a fixed point.

\begin{lemma}
\label{closed-graph}
The correspondence $\phi()$ has a closed graph.
\end{lemma}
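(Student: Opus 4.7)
The plan is a standard Berge-style continuity argument applied to the objective $F(d,\theta) \coloneqq \sum_{i\in[n]} u_i(d)/u_i(\theta)$, exploiting the fact that each $u_i$ is continuous and strictly positive on $P$.

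First I would fix sequences $(\theta_k)_{k \in \mathbb{N}} \subseteq P$ with $\theta_k \to \theta^*$ and $(d_k)_{k \in \mathbb{N}} \subseteq P$ with $d_k \to d^*$, where $d_k \in \phi(\theta_k)$ for every $k$, and show $d^* \in \phi(\theta^*)$. The key preliminary observation is that, by the definition of the utility in \eqref{eq:utility}, we have $u_i(\theta) \in [M\varepsilon, M(1+\varepsilon)]$ for every $\theta \in P$ and every $i \in [n]$, so each $u_i$ is bounded away from $0$ on $P$. Combined with continuity of $u_i$ (assumption 1 in Section \ref{existencecorestable}), this makes $\theta \mapsto 1/u_i(\theta)$ continuous on $P$, and hence $F \colon P \times P \to \mathbb{R}$ is jointly continuous.

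The main step is a contradiction argument. Suppose $d^* \notin \phi(\theta^*)$. Then there exists $d' \in P$ with
\[
F(d', \theta^*) \;>\; F(d^*, \theta^*).
\]
Set $\delta \coloneqq \tfrac{1}{3}\big(F(d',\theta^*) - F(d^*,\theta^*)\big) > 0$. By joint continuity of $F$, there exists $K$ such that for all $k \ge K$:
\[
\big|F(d', \theta_k) - F(d', \theta^*)\big| < \delta \quad \text{and} \quad \big|F(d_k, \theta_k) - F(d^*, \theta^*)\big| < \delta,
\]
where the second inequality uses that $(d_k,\theta_k) \to (d^*,\theta^*)$. Chaining these gives $F(d', \theta_k) > F(d_k, \theta_k)$ for all $k \ge K$, which contradicts the assumption that $d_k \in \phi(\theta_k)$ maximizes $F(\cdot, \theta_k)$ over $P$. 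Hence $d^* \in \phi(\theta^*)$, establishing the closed-graph property.

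I don't anticipate a serious obstacle, but the point that is easy to overlook is that the quotient $u_i(d)/u_i(\theta)$ is well-defined and continuous on all of $P \times P$; this relies crucially on the strict positivity of $u_i$ baked into the definition of $M$ (the $(1+\varepsilon)$ scaling guarantees $u_i \ge M\varepsilon > 0$). Without this, $\phi(\theta)$ could blow up or be ill-defined at predictors where some agent's utility vanishes, and the joint continuity of $F$ would fail exactly at the boundary we care about. Once positivity is noted, the remainder is a textbook application of preservation of inequalities under limits.
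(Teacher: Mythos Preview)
Your proof is correct and follows essentially the same contradiction-via-continuity approach as the paper: both exploit continuity and strict positivity of the $u_i$ to show that a putative better point $d'$ at $\theta^*$ would beat $d_k$ at $\theta_k$ for large $k$. The only difference is presentational---the paper first records an explicit modulus of continuity (its Claim~\ref{technical}) and works with multiplicative ratio bounds, whereas you invoke joint continuity of $F(d,\theta)=\sum_i u_i(d)/u_i(\theta)$ directly and use an additive $\delta$-gap; your version is the cleaner, textbook formulation of the same argument.
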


The detailed proof can be found in the Appendix. Intuitively, since the utility functions are continuous and non-zero, the optima of $\sum_{i \in [n]} \frac{u_i(d)}{u_i(\theta)}$ over $d \in P$, also changes continuously with $\theta$. We are ready to prove the main result of this section.

\begin{theorem}
    \label{extensionsbeyondconcave}
    In any federated learning setting, where the agent's utilities are continuous and the set of maximizers of any conical combination of the agents utilities is convex, a core-stable predictor exists.
\end{theorem}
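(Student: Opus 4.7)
The plan is to apply Kakutani's fixed point theorem (Definition~\ref{kakutani}) to the correspondence $\phi \colon P \to 2^P$ already introduced before the theorem, and then invoke Lemma~\ref{fixedpointtocore} to conclude that any fixed point is core-stable. The two heavy lifting lemmas---namely Lemma~\ref{fixedpointtocore} (fixed points of $\phi$ are core-stable) and Lemma~\ref{closed-graph} (the correspondence $\phi$ has a closed graph)---are already in place, so the theorem reduces to checking the remaining hypotheses of Kakutani.

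First I would verify the assumption on the domain $P$: it is non-empty, compact and convex by the problem setup (the model parameter set $P \subseteq \mathbb{R}^d$ from Section~\ref{Problemstatement}). Next, for the image $\phi(\theta)$ of each $\theta\in P$, I would argue the following. Because each utility function is defined as $u_i(\theta)=M - \mathbb{E}\ell_i(\cdot)$ with $M \ge (1+\varepsilon)\sup \ell_i$, we have $u_i(\theta)\ge M\varepsilon > 0$ for every $\theta\in P$ and every agent $i$. Hence the weights $\alpha_i := 1/u_i(\theta)$ are strictly positive, so $\sum_{i\in[n]} u_i(d)/u_i(\theta)$ is a (strictly positive) conical combination of the $u_i$'s in the variable $d$. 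By hypothesis~1 it is continuous in $d$, and since $P$ is compact, $\phi(\theta)$ is non-empty and compact (the argmax of a continuous function on a compact set). By hypothesis~2, the argmax of any conical combination of the $u_i$ is convex, so $\phi(\theta)$ is convex as well.

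With $P$ satisfying condition~1 of Kakutani, $\phi(\theta)$ satisfying condition~2 for each $\theta$, and the closed graph condition~3 supplied by Lemma~\ref{closed-graph}, Kakutani's fixed point theorem gives a $\theta^*\in P$ with $\theta^* \in \phi(\theta^*)$. Lemma~\ref{fixedpointtocore} then asserts that $\theta^*$ is core-stable, completing the proof.

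The main obstacle is not the theorem itself, which chains together the prepared lemmas almost mechanically, but the two prerequisites that have been delegated to the appendix. The subtler of these is Lemma~\ref{closed-graph}: to show that sequences $\theta_k\to\theta^*$ and $d_k\to d^*$ with $d_k\in\phi(\theta_k)$ force $d^*\in\phi(\theta^*)$, one must exploit the uniform lower bound $u_i(\theta)\ge M\varepsilon$ so that the weights $1/u_i(\theta_k)$ vary continuously and stay bounded, then pass to the limit in the defining maximization using continuity of each $u_i$. Once that is in hand, the theorem statement follows as outlined above, and no additional structural assumptions on the $u_i$ (beyond continuity and the convex-argmax property) are required.
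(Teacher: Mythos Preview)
Your proposal is correct and follows essentially the same route as the paper's own proof sketch: verify the hypotheses of Kakutani's theorem for the correspondence $\phi$, invoke Lemma~\ref{closed-graph} for the closed-graph condition, and then apply Lemma~\ref{fixedpointtocore}. If anything, your write-up is more careful than the paper's sketch, since you explicitly justify why $\phi(\theta)$ is non-empty and compact (via continuity of the conical combination on the compact $P$) and why the coefficients $1/u_i(\theta)$ are legitimate non-negative weights (via the uniform lower bound $u_i(\theta)\ge M\varepsilon$), whereas the paper only remarks on convexity of $\phi(\theta)$.
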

\textit{Proof Sketch.}
    Any fixed point of $\phi()$ corresponds to core stable predictor (Lemma~\ref{fixedpointtocore}). It suffices to show that $\phi()$ admits a fixed point under assumptions in Theorem~\ref{extensionsbeyondconcave}. To this end, note that  domain $P$ of $\phi()$ is non-empty, compact, and convex. For all $\theta \in P$, $\phi(\theta)$ is convex by  assumption in Theorem~\ref{extensionsbeyondconcave}. Finally $\phi()$ has a closed graph by Lemma~\ref{closed-graph}, and thus $\phi()$ admits a fixed point.

\paragraph{Implications.} Theorem~\ref{extensionsbeyondconcave} describes the conditions under which core-stable predictors exist. We briefly state how to adapt the proofs  to show the existence of \emph{locally core-stable} predictors for more general utility functions.  Lemmas~\ref{fixedpointtocore}~\ref{closed-graph}, and Theorem~\ref{extensionsbeyondconcave} are valid even if we change the definition of $\phi(\theta)$ to the set of maximizers of $\sum_{i \in [n]} u_i(d)/ u_i(\theta)$ over $d \in \mathcal{B}(\theta,r)$ (instead of $d \in P$), i.e., we define $\phi(\theta)$ to be the set of maximizers in the local neighbourhood of $\theta$ (within distance $r$ to $\theta$). In such a case, we only need  conditions 1 and 2 to be true within a radius of $r$ of every point, i.e., within $\mathcal{B}(\theta,r)$ for all $\theta  \in P$. These guarantees typically tend to be true for small values of $r$ in Deep Neural Networks. Thus, the predictor corresponding to the fixed point of $\phi$  will satisfy core-stability when restricted to predictors within distance $r$ to it, i.e., it is a locally core-stable predictor.

\subsection{Computation of a Core-Stable Predictor When Utility Functions are Concave}
\label{corestableconvex}
In this section, we show that under certain assumptions on the utility functions, we can describe an efficient distributed protocol that computes the core-stable predictor. In particular, we look into the scenario, where the utility function of each agent is concave. Note that this would automatically satisfy the conditions in Theorem~\ref{extensionsbeyondconcave}, as any conical combination of concave functions is also concave and will admit a convex set of maximizers.

We first show that a core stable predictor can be expressed as an optima of a convex program. In particular, any predictor that maximizes the product of utilities of the agents, i.e., $\mathit{argmax}_{\theta \in P} \prod_{i \in [n]} u_i(\theta)$ (or equivalently the sum of logarithms of the utilities of the agents), is core stable. 
\begin{equation}
\label{convexpgm}
\begin{array}{ll@{}ll}
\text{maximize}  & \displaystyle \mathcal L(\theta) = \sum\limits_{i \in [n]}^{ } \log(u_i(\theta)) \\
\text{subject to}& \theta \in P\\
\end{array}
\end{equation}
Observe that if the utility of each agent is concave, then the above program is convex. Since the logarithm is a concave increasing function, each $\log(u_i())$ is a concave in $\theta$ and the sum of concave functions is concave. Thus,~\ref{convexpgm} is a concave maximization subject to convex constraints.
\begin{theorem}
\label{core-stability-existence}
If $u_i()$ is concave for all $i \in [n]$, then any predictor $\theta^*$ that maximizes the convex program~\ref{convexpgm} is core-stable.
\end{theorem}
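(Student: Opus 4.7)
The plan is to use first-order optimality of the concave program~\ref{convexpgm} at $\theta^*$ together with concavity of each $u_i$ to rule out blocking coalitions. The key identity to aim for is
\[
\sum_{i \in [n]} \frac{u_i(\theta')}{u_i(\theta^*)} \;\le\; n \qquad \text{for every } \theta' \in P,
\]
since this directly contradicts the existence of a blocking coalition: if some $S$ and $\theta'$ satisfied $\tfrac{|S|}{n}u_i(\theta') \ge u_i(\theta^*)$ for every $i \in S$ with at least one strict inequality, then summing $u_i(\theta')/u_i(\theta^*) \ge n/|S|$ over $i \in S$ would give $\sum_{i \in S} u_i(\theta')/u_i(\theta^*) > n$, and the remaining positive terms push the full sum strictly above $n$.

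To establish the inequality, I would fix any $\theta' \in P$ and look along the segment $\theta(t) = (1-t)\theta^* + t\theta'$, which lies in $P$ by convexity. For each $i$, set $h_i(t) = u_i(\theta(t))$; concavity of $u_i$ makes $h_i$ concave on $[0,1]$, so the right derivative $h_i'(0^+)$ exists and, because difference quotients of a concave function are non-increasing, satisfies $h_i'(0^+) \ge h_i(1) - h_i(0) = u_i(\theta') - u_i(\theta^*)$. Since $u_i(\theta^*) > 0$, dividing by it preserves the inequality, and summing gives
\[
\sum_{i \in [n]} \frac{h_i'(0^+)}{u_i(\theta^*)} \;\ge\; \sum_{i \in [n]} \frac{u_i(\theta')}{u_i(\theta^*)} - n.
\]
The left-hand side is precisely the right derivative at $t=0$ of the scalar concave function $t \mapsto \mathcal{L}(\theta(t)) = \sum_i \log u_i(\theta(t))$, which is maximized at $t=0$ because $\theta^*$ is a maximizer of $\mathcal{L}$ over $P$ and $\theta(t) \in P$. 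Hence this right derivative is $\le 0$, yielding the desired $\sum_i u_i(\theta')/u_i(\theta^*) \le n$ and completing the contradiction.

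The main subtlety to watch for is that $\mathcal{L}$ need not be differentiable, so I want to avoid invoking $\nabla \mathcal{L}(\theta^*)$ as if it were a classical gradient; working with one-sided directional derivatives of the concave restriction $t \mapsto \mathcal{L}(\theta(t))$ sidesteps this cleanly, since concavity guarantees existence of the right derivative and optimality at $t=0$ forces it to be nonpositive. A secondary housekeeping point is that the construction of $u_i$ in~\eqref{eq:utility} (with the $(1+\varepsilon)$ buffer in $M$) ensures $u_i(\theta) \ge M\varepsilon > 0$ for every $\theta \in P$, so $\log u_i(\theta)$ is finite everywhere on $P$ and the divisions by $u_i(\theta^*)$ are legitimate; I would note this at the outset.
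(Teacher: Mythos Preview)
Your proposal is correct and follows essentially the same route as the paper: establish $\sum_{i\in[n]} u_i(\theta')/u_i(\theta^*) \le n$ from first-order optimality of $\mathcal L$ at $\theta^*$ combined with concavity of each $u_i$, then derive a contradiction with any blocking coalition. The one refinement you make is to work with one-sided directional derivatives along the segment $\theta(t)$ rather than invoking $\nabla u_i(\theta^*)$ and $\nabla_\theta \mathcal L(\theta^*)$ as the paper does; this buys you validity even when the $u_i$ are merely concave (not differentiable), at no extra cost, whereas the paper's version implicitly assumes the gradients exist.
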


\textit{Proof Sketch.}  We defer a formal proof to Appendix \ref{app:core-stable-concave}. The main technical ingredient of our proof is to show that if $\theta^*$ is a solution to the convex program~\ref{convexpgm}, then,  for any other predictor $\theta' \in P$, we have $\sum_{i \in [n]} \frac{u_i(\theta')}{u_i(\theta^*)} \leq n$.  Now if $\theta^*$ is not core-stable, then there exists an $S \subseteq [n]$ and $\theta'\in P$, such that $u_i(\theta') \ge n/|S| u_i(\theta^*)$ for all $i \in S$ with at least one strict inequality, then we have $\sum_{i \in [n]} \frac{u_i(\theta)}{u_i(\theta')} \geq \sum_{i \in S} \frac{u_i(\theta)}{u_i(\theta')} > n/|S| \cdot |S|  = n$, which is a contradiction.  

{ 
\noindent{\bf Implications.} The proof of Theorem \ref{core-stability-existence} shows that the strong utilitarian property of $\sum_{i \in [n]} \frac{u_i(\theta')}{u_i(\theta^*)} \leq n$ for any $\theta'\in P$ implies core-stability of $\theta^*$ under {\em any} (non-negative) utility functions. Clearly, such a $\theta^*$ must be Pareto-optimal, and furthermore, the inequality implies that at $\theta'$ computed by any other classical method, if some agents gain, then some other agents must be loosing by a lot. Secondly, under concave utilities optima of convex program \eqref{convexpgm} satisfies this property, and hence can be computed in efficiently. Below we discuss a distributed protocol for the same.}
\medskip

\paragraph{Tightness of Our Guarantees.} We make a remark that there are instances, where the guarantees provided by core-stability is tight. These instances typically exhibit large heterogeneous behaviour in their training data. Our guarantees work under the assumption that the utility functions are concave and the domain of predictor is a convex set. Consider the following scenario: Our goal is to choose a predictor $\theta \in \mathcal C$, where $\mathcal C = \{c \in \mathbb{R}^n_{\geq 0} \mid \sum_{i \in [n]} c_i =1 \}$ (so $\mathcal C$  is a convex set). Now, let there be $n$ agents, and $u_i(c) = c_i$ for all $i \in [n]$, and $c \in \mathcal C$ (utility functions are linear and therefore concave). Intuitively, each agent has their ideal predictor to be a distinct axis aligned hyperplane (capturing the heterogeneity in data). Observe that for each agent , the best possible utility is $1$, as there exists a predictor $c^*$ such that $c^*_i =1$ and $c^*_k = 0$ for all $k \neq i$.  Now, note that, for any predictor $c \in \mathcal C$, we have $\sum_{i \in [n]} c_i =1$. Thus, for any classifier $c$ chosen, there exists an agent $i$, such that $u_i(c) = c_i \leq 1/n$. Thus, for each predictor $c$, there exists a set $S = \{i\}$ and a another predictor $c^* \in \mathcal C$ such that $\sum_{i \in S} u_i(c^*)/ u_i(c) > n$. Observe that even our guarantees of proportionality are tight in this example. This example shows that the scaling factor of $|S|/n$ is unavoidable.

\vspace{0.5cm}

We now propose a distributed SGD framework to determine a core stable predictor. We call our Algorithm as \core  (Fully outlined in Algorithm~\ref{alg:training-protocol} in the appendix).

\paragraph{\core .}  Observe that for convex losses, we can directly solve this maximization to the optimal to achieve core-stability. For non-convex losses such as those for DNNs, we apply gradient descent to maximize the objective. Suppose we are training on $n$ finite samples $\{(x_i,y_i)\}_{i\in[n]}$ drawn from the data distribution $\gP$, which constitute empirical distribution $\hat\gP_n$. We observe that, the gradient can be expressed as an conical combination of the gradients of each group:
    \begin{equation}
        \nabla_\theta \gL(\theta) = \sum_{i\in [n]} \dfrac{\nabla_\theta u_i(\theta)}{u_i(\theta)} = \sum_{i\in [n]} \dfrac{-\nabla_\theta \E_{(x,y)\sim\hat\gP_n^{(i)}} \ell(f_\theta(x),y)}{M_i - \E_{(x,y)\sim\hat\gP_n^{(i)}} \ell(f_\theta(x),y)}.
    \end{equation}
    Therefore, for each group, we reweight its gradients or weight updates by $(M_i - \E_{(x,y)\sim\hat\gP_n^{(i)}} \ell(f_\theta(x),y))^{-1}$ and then sum up to get the final weight update in each iteration, which leads to a distributed training protocol shown in \Cref{alg:training-protocol}.
    
    This protocol is similar to standard FedAvg. However, in \core the model weight updates are weighted then aggregated at each iteration, while in FedAvg, model weights are directly averaged and aggregated at each iteration.
    In the limit that each local update uses single step with entire dataset, $\Delta \theta_s = - \eta \frac{1}{|\gD_s|}\sum_{i=1}^{|\gD_s|} \nabla_{\theta^t} \ell(f_{\theta^t}(x_s^{(i)}), y_s^{(i)})$, where $\gD_s = \{(x_s^{(i)}, y_s^{(i)}): 1 \le i \le |\gD_s|\}$ is the training dataset on device $s$. Therefore, the global update is a unbias gradient descent step of the objective $\sum_s \log(M_s - \frac{1}{|\gD_s|}\sum_{i=1}^{|\gD_s|} \ell(f_{\theta^t}(x_s^{(i)}), y_s^{(i)}))$ {$ = \gL(\theta_t)$ where $\gL(\cdot)$ is defined in \ref{convexpgm}}.


\subsection{Core-Stability in Linear Regression and Classification with Logistic Regression}
\label{linregressioncore}
We now discuss some of  the predictive models, where the concavity requirements of the utility function is satisifed. 
Note that a necessary and sufficient condition for $u_i()$ to be concave is that the loss function $\ell()$ should be convex in $c$. Here we elaborate few scenarios where this is true.
Suppose we are training on $n$ finite samples $\{(x_i,y_i)\}_{i\in[n]}$ drawn from the data distribution $\gP$, which constitute empirical distribution $\hat\gP_n$.

\paragraph{Linear Regression.} These are the scenarios where we map our input variables to a real number (not discrete class labels).  In this case, we have $f_{\theta}(x) = \theta^\T x $. Observe that the \emph{regression loss}, then
    $\E_{(x,y)\sim\hat\gP_n} \ell(f_\theta(x), y) = \E_{(x,y)\sim\hat\gP_n} (\theta^\T x - y)^2$
    would be convex in $\theta$.
    
\begin{lemma}
    \label{linear-regression-main}
     \core determines a core-stable predictor in a federated learning setting training linear regression.
\end{lemma}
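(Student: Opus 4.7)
The plan is to reduce the claim directly to Theorem~\ref{core-stability-existence} by verifying that the per-agent utility functions arising from the squared-error loss are concave, and then arguing that \core's update rule performs (projected) gradient ascent on the concave objective $\mathcal{L}(\theta)$ in \eqref{convexpgm}, so it converges to a maximizer which Theorem~\ref{core-stability-existence} certifies as core-stable.

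First I would verify concavity. Writing the empirical squared loss of agent $i$ as
\[
    \E_{(x,y)\sim\hat\gP_n^{(i)}} (\theta^\T x - y)^2 = \theta^\T A_i \theta - 2 b_i^\T \theta + c_i,
\]
with $A_i = \E[xx^\T]$ positive semidefinite, $b_i = \E[xy]$, and $c_i = \E[y^2]$, it is immediate that this loss is convex in $\theta$. Hence $u_i(\theta) = M_i - \E_{(x,y)\sim\hat\gP_n^{(i)}}(\theta^\T x - y)^2$ is concave in $\theta$, and by the choice of $M_i \ge (1+\varepsilon)\sup \ell$ we also have $u_i(\theta) > 0$ throughout the feasible set $P$. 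Consequently $\log u_i(\theta)$ is concave (as the composition of the concave, nondecreasing $\log$ with a positive concave function), so $\mathcal{L}(\theta) = \sum_{i \in [n]} \log u_i(\theta)$ is concave, and the maximization in \eqref{convexpgm} is a concave program over the convex feasible set $P$.

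Next I would invoke Theorem~\ref{core-stability-existence}: because each $u_i$ is concave, any $\theta^\star$ attaining the maximum of $\mathcal{L}$ on $P$ is core-stable. Thus it only remains to argue that \core's output in fact attains this maximum. The \core update at iteration $t$ aggregates, over agents $i$, local gradients reweighted by $1/u_i(\theta^t)$, which in the single-local-step limit described at the end of Section~\ref{corestableconvex} is an unbiased estimate of $\nabla_\theta \mathcal{L}(\theta^t)$. Since $\mathcal{L}$ is concave and smooth on $P$ (the Hessians of $u_i$ are constant and $u_i$ is bounded away from $0$, so the log composition has bounded Hessian on $P$), standard convergence guarantees for projected gradient ascent on a concave smooth objective over a convex compact domain yield convergence of $\theta^t$ to a maximizer $\theta^\star$ of \eqref{convexpgm}.

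The one subtle step will be the last one: making precise the sense in which \core's aggregate update corresponds to ascent on $\mathcal{L}$ when agents take multiple local steps before communication. I would handle this either by specializing to the single-step regime (where the identification is exact, as already pointed out in Section~\ref{corestableconvex}), or by noting that for the convex case it suffices to view \core as distributed (projected) gradient ascent on the global concave objective $\mathcal{L}$, which is the regime in which classical convergence theorems apply. This then completes the chain: concavity of $u_i$ $\Rightarrow$ concavity of $\mathcal{L}$ $\Rightarrow$ \core finds $\argmax_{\theta \in P} \mathcal{L}(\theta)$ $\Rightarrow$ by Theorem~\ref{core-stability-existence}, the output is core-stable.
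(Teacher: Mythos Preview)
Your proposal is correct and follows exactly the paper's approach: the paper's ``proof'' of this lemma is essentially the one-line observation preceding it that the squared regression loss $\E_{(x,y)\sim\hat\gP_n}(\theta^\T x - y)^2$ is convex in $\theta$, from which concavity of each $u_i$ and hence Theorem~\ref{core-stability-existence} immediately apply. Your write-up supplies more detail (the explicit quadratic form for the loss, the positivity of $u_i$, and the convergence discussion for \core) than the paper itself does, but the logical structure is identical.
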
    
    
\paragraph{Classification with Logistic Regression.} 
     In classification tasks we map the input variables to discrete class labels. A commonly used loss function in classification is logistic regression. Given a classifier $\theta$ and a scalar $c \in \mathbb{R}$, an agent $i$'s loss is given by  
     $\ell_i(\theta, c) =  \frac{||\theta||_2}{2} + \alpha \cdot \sum_{i \in [n]} \log ( e^{-y_i(\theta^\T x_i + c)} + 1)$~{\cite{scikitlogistic,agresti2003categorical,hosmer2013applied}}. It is well known that $\ell_i(\theta,c)$ is convex~{(see, e.g., \cite{hosmer2013applied})}. Thus, $u_i(\theta,c) = M_i -\ell_i(\theta,c)$ where $M_i = \argmax_{ \theta \in P, c \in \mathbb{R}} (\ell_i(\theta,c))$, is concave. 
 
\begin{lemma}
    \label{logistic-regression-main}
         \core determines a core-stable predictor in a federated learning setting training classification with logistic regression.
\end{lemma}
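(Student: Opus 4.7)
The plan is to reduce Lemma \ref{logistic-regression-main} to Theorem \ref{core-stability-existence} by verifying that the concavity hypothesis is met for logistic regression and that \core implements the optimization of the convex program \eqref{convexpgm}. First, I would observe that since the logistic loss
\[
\ell_i(\theta,c) = \frac{\|\theta\|_2}{2} + \alpha \cdot \sum_{j} \log\bigl(e^{-y_j(\theta^{\T} x_j + c)} + 1\bigr)
\]
is convex in $(\theta,c)$ (as cited from the classical references), the corresponding utility $u_i(\theta,c) = M_i - \ell_i(\theta,c)$ is concave in $(\theta,c)$. By the choice $M_i \ge (1+\varepsilon)\sup_{(\theta,c)} \ell_i(\theta,c)$ from the utility construction in \eqref{eq:utility}, we also have $u_i(\theta,c) \ge M_i\varepsilon > 0$ on the entire (closed, convex) parameter domain $P$, so $\log u_i(\theta,c)$ is well-defined and concave.

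Next, I would note that the joint objective $\mathcal{L}(\theta,c) = \sum_{i\in[n]} \log u_i(\theta,c)$ is then a sum of concave functions, hence concave, and the feasible set $P$ (which is convex in the logistic-regression parameterization) makes \eqref{convexpgm} a concave maximization problem over a convex domain. I would then appeal directly to Theorem \ref{core-stability-existence}, which ensures that any maximizer $(\theta^\star,c^\star)$ of \eqref{convexpgm} is a core-stable predictor. This reduces the lemma to establishing that \core actually produces such a maximizer.

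To close this remaining gap, I would invoke the discussion immediately following the description of \core: for convex loss functions the maximization of $\mathcal{L}$ can be solved to optimality, and the distributed gradient step in \core, when aggregated, equals a gradient step on $\mathcal{L}$ itself (as shown in the unbiased-gradient calculation right below Algorithm \ref{alg:training-protocol}). Because $\mathcal{L}$ is concave with a compact convex feasible set and agent utilities are bounded away from zero, standard convergence results for (distributed) gradient ascent on concave objectives imply that \core converges to a global maximizer of \eqref{convexpgm}, which by the previous paragraph is core-stable.

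The only genuine subtlety—and therefore the main obstacle I would want to double-check—is that the logarithmic reweighting does not introduce pathologies: specifically, that $u_i$ is uniformly bounded away from $0$ on $P$ so that $\nabla \log u_i = \nabla u_i / u_i$ is bounded and Lipschitz on $P$, ensuring the gradient-based update in \core is well-behaved. This is handled by the strict inequality $M_i > (1+\varepsilon)\sup \ell_i$ built into the utility definition. Once this uniform positivity is in place, the reduction to Theorem \ref{core-stability-existence} yields core-stability, completing the proof in essentially the same way as Lemma \ref{linear-regression-main}.
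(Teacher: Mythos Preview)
Your proposal is correct and follows essentially the same approach as the paper, which treats the lemma as an immediate corollary of the preceding discussion: the logistic loss is convex, hence each $u_i$ is concave, so Theorem~\ref{core-stability-existence} applies and \core (which optimizes \eqref{convexpgm}) returns a core-stable predictor. The paper does not give a separate formal proof beyond this reduction; your added remarks on uniform positivity of $u_i$ and convergence of the gradient scheme are reasonable elaborations but not something the paper itself spells out.
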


\subsection{Approximate Core-Stability in Deep Neural Networks}
\label{dnncore}
    \Cref{core-stability-existence} requires that $u_i(\theta)$ is concave in terms of $\theta$ and global optimality for the objective $\gL(\theta) = \prod_{i\in[n]} u_i(\theta)$ to achieve core-stability.
    However, these two conditions are challenging to be satisfied for DNNs, where the training loss is non-convex and common training methods, which are based on first-order gradients, are not guaranteed to absolutely converge.
    In this more general scenario, the following theorem shows the relaxed local core-stability that we can attain for approximately first-order converged predictor~(i.e., predictor with small local gradient $||\nabla_\theta \gL(\theta)||_2 \le \epsilon$).

    \begin{definition}
    \label{pseudo-core-stability}
    A predictor $\theta \in P$, is called $(d,k)$-pseudo core stable, where $d > 0, k > 1$ if there exists no other $\theta' \in P$ such that $||\theta' - \theta||_2 < d$, and no subset $S$ of agents, such that $\frac{|S|}{kn} u_i(\theta') \geq u_i(\theta)$ for all $i \in S$, with at least one strict inequality.
    \end{definition}
    
    \begin{theorem}
        \label{pseudo-core-stability-dnn}
        For all $i\in [n]$, if $u_i(\theta)$ is $\beta$-smooth in terms of $\theta$ within $\{\theta': ||\theta - \theta'||_2 \le d\}$, and $||\nabla_\theta \gL(\theta)||_2 \le \epsilon$, then $\theta$ is a $(d,k)$-pseudo core stable predictor, where 
        \begin{equation}
            d = \frac{-\epsilon + \sqrt{\epsilon^2 + 2\beta (k-1)n \sum_{i\in [n]} u_i(\theta)^{-1}}}{\beta\sum_{i\in [n]} u_i(\theta)^{-1}}.
            \label{eqn:pseudo-core-stability-dnn}
        \end{equation}
    \end{theorem}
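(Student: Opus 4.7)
The strategy is a contrapositive argument that combines the blocking-coalition lower bound on $\sum_i u_i(\theta')/u_i(\theta)$ with an $\epsilon$-stationary, $\beta$-smooth upper bound on the same quantity, and then solves the resulting quadratic for $\|\theta'-\theta\|_2$. This mirrors the convex argument in Theorem \ref{core-stability-existence}, where first-order optimality forced $\sum_i u_i(\theta')/u_i(\theta)\le n$; here we must pay an additive slack that depends on $\epsilon$, $\beta$, and the local utility levels $u_i(\theta)$.

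\textbf{Step 1 (blocking implies a lower bound).} Suppose, for contradiction, that $\theta$ is not $(d,k)$-pseudo core-stable, so there exist $S\subseteq[n]$ and $\theta'\in P$ with $r:=\|\theta'-\theta\|_2<d$ and $u_i(\theta')\ge (kn/|S|)\,u_i(\theta)$ for every $i\in S$, with at least one strict inequality. Summing the ratios over $S$ and using $u_i(\theta')/u_i(\theta)\ge 0$ for $i\notin S$ (utilities are non-negative by construction of $M_i$) gives
\begin{equation*}
\sum_{i\in[n]} \frac{u_i(\theta')}{u_i(\theta)} \;\ge\; \sum_{i\in S}\frac{u_i(\theta')}{u_i(\theta)} \;\ge\; \frac{kn}{|S|}\cdot|S| \;=\; kn.
\end{equation*}

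\textbf{Step 2 (smoothness + stationarity give an upper bound).} Because each $u_i$ is $\beta$-smooth on the ball of radius $d$ around $\theta$, and $r<d$,
\begin{equation*}
u_i(\theta') \;\le\; u_i(\theta) + \langle \nabla u_i(\theta),\,\theta'-\theta\rangle + \tfrac{\beta}{2}\|\theta'-\theta\|_2^{\,2}.
\end{equation*}
Dividing by $u_i(\theta)>0$, summing over $i\in[n]$, and recalling $\nabla_\theta\gL(\theta)=\sum_i \nabla u_i(\theta)/u_i(\theta)$,
\begin{equation*}
\sum_{i\in[n]} \frac{u_i(\theta')}{u_i(\theta)} \;\le\; n + \langle \nabla_\theta\gL(\theta),\,\theta'-\theta\rangle + \frac{\beta r^2}{2}\sum_{i\in[n]} u_i(\theta)^{-1}.
\end{equation*}
Cauchy--Schwarz together with $\|\nabla_\theta\gL(\theta)\|_2\le\epsilon$ bounds the inner product by $\epsilon r$.

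\textbf{Step 3 (solve the quadratic).} Combining the two bounds yields
\begin{equation*}
(k-1)n \;\le\; \epsilon\, r + \frac{\beta}{2}\Bigl(\sum_{i\in[n]} u_i(\theta)^{-1}\Bigr) r^2.
\end{equation*}
Writing $A=\tfrac{\beta}{2}\sum_i u_i(\theta)^{-1}$, the inequality $A r^2+\epsilon r-(k-1)n\ge 0$ forces $r$ to be at least the positive root of the corresponding quadratic, namely
\begin{equation*}
r \;\ge\; \frac{-\epsilon+\sqrt{\epsilon^{\,2}+4A(k-1)n}}{2A} \;=\; \frac{-\epsilon+\sqrt{\epsilon^{\,2}+2\beta(k-1)n\sum_{i\in[n]} u_i(\theta)^{-1}}}{\beta\sum_{i\in[n]} u_i(\theta)^{-1}},
\end{equation*}
which is exactly the $d$ defined in \eqref{eqn:pseudo-core-stability-dnn}. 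This contradicts $r<d$, completing the proof.

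\textbf{Main obstacle.} The only subtlety is aligning the two inequalities on $\sum_i u_i(\theta')/u_i(\theta)$: the blocking hypothesis provides a \emph{lower} bound, while smoothness naturally yields an \emph{upper} bound on $u_i(\theta')$, and hence an upper bound on the ratio after dividing by the positive quantity $u_i(\theta)$. One must also verify that dropping the $i\notin S$ terms from the lower bound is legitimate (it is, since $u_i(\cdot)\ge 0$) and that the stationarity term enters with the correct sign after Cauchy--Schwarz; once these are checked, the rest is a one-line quadratic-formula calculation that recovers the stated expression for $d$.
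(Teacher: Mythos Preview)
Your proof is correct and follows essentially the same approach as the paper's own argument: both combine the $\beta$-smooth upper bound on $u_i(\theta')$ (divided by $u_i(\theta)$ and summed to introduce $\nabla_\theta\gL(\theta)$), the Cauchy--Schwarz step using $\|\nabla_\theta\gL(\theta)\|_2\le\epsilon$, and the blocking-coalition lower bound $\sum_i u_i(\theta')/u_i(\theta)\ge kn$, then read off the positive root of the resulting quadratic in $r=\|\theta'-\theta\|_2$. The only cosmetic difference is ordering: you start from the contrapositive hypothesis and solve for $r\ge d$, whereas the paper first verifies that $r<d$ forces $\sum_i u_i(\theta')/u_i(\theta)<kn$ and then derives the contradiction.
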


\paragraph{Implications.} {We defer the proof to \Cref{adxsec:pf-thm3}. Theorem~\ref{pseudo-core-stability-dnn} states that, for smooth neural networks, there exists no predictor $\theta'$ in the neighborhood with $\ell_2$ radius $d$, that any subset of agents prefer ``significantly''. Although our guarantees are local guarantees, we remark that global fairness guarantees are unlikely for DNNs. Most of the fairness guarantees in computational social choice and game theory crucially require the agents to have convex preferences, i.e., the level sets of the utility functions need to be convex. There are impossibility results for fairness when the agent's preferences are non-convex. However, while non-convex consumer preferences are not interesting from an economic standpoint, our current work finds an application for these preferences in fairness in federated learning with DNNs.} 

    
\subsection{Weighted Core-Stability}
\label{weightedcore}
In this section, we show how to generalize all of our results (Theorems~\ref{extensionsbeyondconcave},~\ref{core-stability-existence}, and~\ref{pseudo-core-stability-dnn}) when we want to train the joint predictor to fit the data of certain agents more than some others. In particular, for each agent $i$, if we assign weight $w_i$, indicating the desired bias of the final trained model towards agent $i$\footnote{Following the light of~\cite{DonahueK21}, one possible candidate can be $w_i = \gD_i$, i.e.,  set $w_i$ to the size of the data shared by agent $i$ with the model.}, then with subtle modifications, we can show the existence of a \emph{weighted core stable} predictor, when the utility functions of the agents satisfy the conditions in Theorem~\ref{extensionsbeyondconcave}. Formally,
\begin{definition}[Weighted Core-Stability]
 Given the weight vector $w = \langle w_1, w_2, \dots, w_n \rangle$, a predictor $\theta \in P$, is weighted core-stable if and only if there exists no other predictor $\theta' \in P$ and a subset of agents $S \subseteq [n]$ such that $\frac{ \sum_{j \in S} w_j} {\sum_{j \in [n]} w_j} \cdot  u_i(\theta') \geq u_i (\theta)$ for all $i \in S$ with at least one strict inequality.
\end{definition}

Note that, when all the agents have the same weight, {\em e.g.,} $w_i=1, \forall i\in[n]$, then weighted core-stability matches core-stability.  At a high-level the concept is the same, no group of agents can significantly benefit by forming a coalition within themselves. However ``significantly'' means a multiplicative increase by $\frac{ \sum_{j \in [n]} w_j} {\sum_{j \in S} w_j}$ (instead of $|S|/n$ for the unweighted case), i.e., it is dependent on the total weight of the set $S$. We make the aforementioned guarantee more intuitive by considering special cases of $S$. When $S = \{i\}$, our guarantees say that agent $i$ gets a utility of $w_i/ (\sum_{j \in [n]} w_j)$ fraction of her maximum utility, i.e., the utility of agents with higher weights are prioritized. We call this \emph{weighted proportionality}. Also note that by setting $S = [n]$, we get Pareto-optimality (similar to the unweighted case). 

Furthermore, by changing the convex program~\ref{convexpgm} to maximizing $\sum_{j \in [n]} w_j \log(u_j(\theta))$ instead of $\sum_{j \in [n]} \log(u_j(\theta))$, we can get the weighted version of Theorem~\ref{core-stability-existence}. This also suggests a simple generalization of \core to Weighted-\core and all our extensions in Sections \ref{linregressioncore} and~\ref{dnncore} will also generalize to the weighted setting. 

\section{Empirical Evaluation}
\label{experiments}

We evaluate our fair ML method \core and baseline FedAvg~\cite{fedavg} on three datasets (Adult, MNIST and CIFAR-10) on linear model and  deep neural networks.
We show that the model trained with \core is able to achieve core-stable fairness, while maintaining similar utility with the standard FedAvg protocol, which cannot guarantee to achieve core-stable fairness.

\subsection{Experiment Setup}
\label{sec:exp-data}

\textbf{Dataset. }
We evaluate our algorithm \core on Adult~\cite{asuncion2007uci}, MNIST~\cite{lecun1998gradient} and CIFAR-10~\cite{krizhevsky2009learning} datasets.
To perform federated learning on heterogeneous data, we construct the non-IID data by sampling the proportion of each label from Dirichlet distribution for every agent, following the literature~\cite{li2021federated}.

\textbf{Models.}
We train a logistic regression classifier on Adult data. 
We use a CNN, which has two 5x5 convolution layers followed by 2x2 max pooling and two fully connected layer with ReLU activation for MNIST and CIFAR-10.
We also evaluate with a more complex network VGG-11 on CIFAR-10.
For Adult dataset, the utitility is selected as $M-\ell_{log}$ where $\ell_{log}$ is the logistic loss.
For CIFAR-10 and MNIST, we use cross entropy loss $\ell_{ce}$ as the training loss with utility $U$ becomes $M-\ell_{ce}$.
$M$ is set to be $3.0$, $1.0$ and $3.0$ for Adult, MNIST, and CIFAR-10, respectively, based on statistical analysis during training.
All experiments are conducted on a 1080 Ti GPU.

\begin{table}[h]
    \centering
\begin{small}
    \caption{Comparison of utility ($M-\ell_{ce}$) for each agent trained with \core  and FedAvg. We see that $\sum_{i\in [n]} \frac{u_i(\theta')}{u_i(\theta^*)} <n$ holds, where $\theta'$ denotes the weights of shared model trained by FedAvg and $\theta^*$ by \core.}
    \begin{tabular}{cccccccc}
    \toprule
    Dataset & Method & Agent 0 & Agent 1 & Agent 2  & U(Average) & U(Multi) &  $\sum_{i\in [n]} \dfrac{u_i(\theta')}{u_i(\theta^*)} $\\
    \hline
    \multirow{2}{*}{Adult} & FedAvg & 2.59 & 0.77 & 1.46 & 1.61 & 2.91 & \multirow{2}{*}{2.80 (<3)}\\
    & CoreFed & 2.62 & 0.90 & 1.53 & 1.68  & 3.61 &\\
    \hline
    \multirow{2}{*}{MNIST} & FedAvg & 0.34 & 0.29 & 0.92 & 0.52 & 0.091 & \multirow{2}{*}{2.66 (<3)}\\
    & CoreFed &  0.36 & 0.41 & 0.91 & 0.56 & 0.13 &\\
    \hline
    \multirow{2}{*}{CIFAR-10} & FedAvg & 0.63 & 1.40 & 0.51 & 0.84  & 0.45 & \multirow{2}{*}{2.62 (<3)} \\
    & CoreFed & 0.73 & 1.35 & 0.71 & 0.93  & 0.70 & \\
    \bottomrule
    \end{tabular}
    \end{small}
    \label{tab:result}
\end{table}

\begin{table}[h]
    \centering
\begin{small}
    \caption{Comparison of utility ($M-\ell_{ce}$) for each agent trained with \core and FedAvg on CIFAR-10 with network VGG-11.}
    \begin{tabular}{ccccccc}
    \toprule
   Method & Agent 0 & Agent 1 & Agent 2  & U(Average) & U(Multi) &  $\sum_{i\in [n]} \dfrac{u_i(\theta')}{u_i(\theta^*)} $\\
    \hline
    FedAvg & 0.25 & 3.25 & 3.46 & 2.35 & 2.89 & \multirow{2}{*}{2.25 (<3)}\\
    CoreFed & 1.63 & 3.17 & 3.32 & 2.71  & 17.15 &\\
    \bottomrule
    \end{tabular}
    \label{tab:result_complex_model}
    \end{small}
\end{table}

\subsection{Evaluation Results}
\label{sec:exp-result}
We demonstrate that our \core distributed training protocol in \Cref{alg:training-protocol} achieves the core-stable fairness through comparison with FedAvg on different datasets and settings.
Concretely, we perform training with FedAvg and our proposed \core, and then validate whether the utility inequality $\sum_{i\in [n]} \frac{u_i(\theta')}{u_i(\theta^*)} <n$ (see {\em Implications} after \Cref{core-stability-existence}) holds under different settings. 
Here we treat the model trained by FedAvg parameterized by $\theta'$, while the model trained by our \core parameterized by $\theta^*$.
That is to say, since the model trained by \core achieves core-stable fairness, we expect the model parameterized by $\theta$ would have pareto-optimality.
Indeed, results in \Cref{tab:result} suggest that \core achieves core-stable fairness compared with FedAvg while maintaining similar utility. 
We also report the average and multiplicative utility of the trained global model in ``U(Average)" and ``U(Multi)" columns. We can see that  \core achieves higher overall utilities, especially for the multiplicative case since FedAvg favors the average case in general.
We validate the conclusion on more complex DNN VGG-11 on CIFAR-10 in \Cref{tab:result_complex_model}.
In addition, we explicitly consider agents with low quality data by adding Gaussian noises to some agents as shown in \Cref{tab:result_noise}, demonstrating the optimality of \core.
We also perform the evaluation with more agents as shown in \Cref{tab:result_more_agents}.

\begin{table}
    \centering
\begin{small}
    \caption{Comparison of utility ($M-\ell_{ce}$) for each agent trained with \core and FedAvg on CIFAR-10 in the scenario that some agents have data of low quality (i.e., with added Gaussian noise). The variance of added Gaussian noise is 0.0,0.5,1.0 for agent 0,1,2, respectively. }
    \begin{tabular}{ccccccc}
    \toprule
   Method & Agent 0 & Agent 1 & Agent 2  & U(Average) & U(Multi) &  $\sum_{i\in [n]} \dfrac{u_i(\theta')}{u_i(\theta^*)} $\\
    \hline
    FedAvg & 3.28 & 3.30 & 1.42 & 2.67 & 15.37 & \multirow{2}{*}{2.74 (<3)}\\
    CoreFed & 3.26 & 3.27 & 1.95 & 2.83 & 20.79 &\\
    \bottomrule
    \end{tabular}
    \label{tab:result_noise}
    \end{small}
\end{table}

\begin{table}[h]
\footnotesize
    \centering
    \caption{\small Comparison of utility ($M-\ell_{ce}$) for each agent trained with \core and FedAvg on CIFAR-10 with more agents. $Ai$ represents the $i$-th agent. $U_A$ and $U_M$ denote the average and multiplication of the utility respectively.}
    \begin{tabular}{p{0.08\linewidth}p{0.03\linewidth}p{0.03\linewidth}p{0.03\linewidth}p{0.03\linewidth}p{0.03\linewidth}p{0.03\linewidth}p{0.03\linewidth}p{0.03\linewidth}p{0.03\linewidth}p{0.05\linewidth}p{0.05\linewidth}p{0.05\linewidth}p{0.09\linewidth}}
    \toprule
   Method & A0 & A1 & A2 & A3 & A4 & A5 & A6 & A7 & A8 & A9 & $U_A$ & $U_M$ &  $\sum_{i} \dfrac{u_i(\theta')}{u_i(\theta^*)}$ \\
    \hline
    FedAvg &  2.11 & 2.30 & 3.04 & 3.28 & 1.15 & 2.70 & 2.00 & 2.72 & 2.76 & 3.14 & 2.52 & 7084 & \multirow{2}{*}{ 9.77(<10)}\\
    CoreFed & 2.26 & 2.50 & 3.12 & 3.32 & 1.42 & 2.50 & 1.99 & 2.80 & 2.65 & 2.99 & 2.55 & 9173  &\\
    \bottomrule
    \end{tabular}
    \label{tab:result_more_agents}
\end{table}


\section{Conclusion}
In this work, we introduce a new notion of fairness in federated learning, inspired from a fundamental concept in social choice theory. We show that the new fairness notion is more resilient to noisy data from certain clients, in comparison to FedAvg or egalitarian fair federated learning methods. We believe this work would open up new research directions on connecting game theoretic analysis and statistical machine learning under different learning paradigms, objective, and utilities.

\paragraph{Acknowledgements} This work is supported by the NSF CAREER grant CCF No.1750436, NSF grant No.1910100,
NSF CAREER grant CNS No.2046726, C3 AI, and the Alfred P. Sloan Foundation.

\newpage
\bibliographystyle{plain}
\bibliography{main}

\begin{thebibliography}{10}

\bibitem{agresti2003categorical}
Alan Agresti.
\newblock {\em Categorical data analysis}.
\newblock John Wiley \& Sons, 2003.

\bibitem{asuncion2007uci}
Arthur Asuncion and David Newman.
\newblock Uci machine learning repository, 2007.

\bibitem{Aziz19}
Haris Aziz.
\newblock Justifications of welfare guarantees under normalized utilities.
\newblock {\em SIGecom Exch.}, 17(2):71--75, 2019.

\bibitem{budish2011combinatorial}
Eric Budish.
\newblock The combinatorial assignment problem: Approximate competitive
  equilibrium from equal incomes.
\newblock {\em Journal of Political Economy}, 119(6):1061--1103, 2011.

\bibitem{DonahueKModel21}
Kate Donahue and Jon~M. Kleinberg.
\newblock Models of fairness in federated learning.
\newblock {\em CoRR}, abs/2112.00818, 2021.

\bibitem{DonahueK21}
Kate Donahue and Jon~M. Kleinberg.
\newblock Optimality and stability in federated learning: {A} game-theoretic
  approach.
\newblock In {\em NeurIPS}, pages 1287--1298, 2021.

\bibitem{DuXWT21}
Wei Du, Depeng Xu, Xintao Wu, and Hanghang Tong.
\newblock Fairness-aware agnostic federated learning.
\newblock In {\em {SDM}}, pages 181--189. {SIAM}, 2021.

\bibitem{FainM018}
Brandon Fain, Kamesh Munagala, and Nisarg Shah.
\newblock Fair allocation of indivisible public goods.
\newblock In {\em {EC}}, pages 575--592. {ACM}, 2018.

\bibitem{foley1966resource}
Duncan~Karl Foley.
\newblock {\em Resource allocation and the public sector}.
\newblock Yale University, 1966.

\bibitem{GourvesMT14}
Laurent Gourv{\`{e}}s, J{\'{e}}r{\^{o}}me Monnot, and Lydia Tlilane.
\newblock Near fairness in matroids.
\newblock In {\em {ECAI}}, volume 263 of {\em Frontiers in Artificial
  Intelligence and Applications}, pages 393--398. {IOS} Press, 2014.

\bibitem{hosmer2013applied}
David~W Hosmer~Jr, Stanley Lemeshow, and Rodney~X Sturdivant.
\newblock {\em Applied logistic regression}, volume 398.
\newblock John Wiley \& Sons, 2013.

\bibitem{HuangLWHLZ21}
Tiansheng Huang, Weiwei Lin, Wentai Wu, Ligang He, Keqin Li, and Albert~Y.
  Zomaya.
\newblock An efficiency-boosting client selection scheme for federated learning
  with fairness guarantee.
\newblock {\em {IEEE} Trans. Parallel Distributed Syst.}, 32(7):1552--1564,
  2021.

\bibitem{huang2022fairness}
Wei Huang, Tianrui Li, Dexian Wang, Shengdong Du, Junbo Zhang, and Tianqiang
  Huang.
\newblock Fairness and accuracy in horizontal federated learning.
\newblock {\em Information Sciences}, 589:170--185, 2022.

\bibitem{kakutani1941}
Shizuo Kakutani.
\newblock A generalization of {B}rouwer’s fixed point theorem.
\newblock {\em Duke mathematical journal}, 8(3):457--459, 1941.

\bibitem{konevcny2016federated}
Jakub Kone{\v{c}}n{\`y}, H~Brendan McMahan, Felix~X Yu, Peter Richt{\'a}rik,
  Ananda~Theertha Suresh, and Dave Bacon.
\newblock Federated learning: Strategies for improving communication
  efficiency.
\newblock {\em arXiv preprint arXiv:1610.05492}, 2016.

\bibitem{krizhevsky2009learning}
Alex Krizhevsky, Geoffrey Hinton, et~al.
\newblock Learning multiple layers of features from tiny images.
\newblock 2009.

\bibitem{lecun1998gradient}
Yann LeCun, L{\'e}on Bottou, Yoshua Bengio, and Patrick Haffner.
\newblock Gradient-based learning applied to document recognition.
\newblock {\em Proceedings of the IEEE}, 86(11):2278--2324, 1998.

\bibitem{li2021federated}
Qinbin Li, Yiqun Diao, Quan Chen, and Bingsheng He.
\newblock Federated learning on non-iid data silos: An experimental study.
\newblock {\em arXiv preprint arXiv:2102.02079}, 2021.

\bibitem{LiSBS20}
Tian Li, Maziar Sanjabi, Ahmad Beirami, and Virginia Smith.
\newblock Fair resource allocation in federated learning.
\newblock In {\em {ICLR}}. OpenReview.net, 2020.

\bibitem{LiptonMMS04}
Richard~J. Lipton, Evangelos Markakis, Elchanan Mossel, and Amin Saberi.
\newblock On approximately fair allocations of indivisible goods.
\newblock In {\em Proceedings of the 5th ACM Conference on Electronic
  Commerce}, pages 125--131, 2004.

\bibitem{fedavg}
H.~Brendan McMahan, Eider Moore, Daniel Ramage, Seth Hampson, and Blaise
  Ag{\"u}era~y Arcas.
\newblock Communication-efficient learning of deep networks from decentralized
  data.
\newblock In {\em International Conference on Artificial Intelligence and
  Statistics (AISTATS)}, 2017.

\bibitem{mohri2019agnostic}
Mehryar Mohri, Gary Sivek, and Ananda~Theertha Suresh.
\newblock Agnostic federated learning.
\newblock In {\em International Conference on Machine Learning}, pages
  4615--4625. PMLR, 2019.

\bibitem{Moulin19}
Herve Moulin.
\newblock Fair division in the internet age.
\newblock {\em Annual Review of Economics}, 11, 2019.

\bibitem{muench1972core}
Thomas~J Muench.
\newblock The core and the lindahl equilibrium of an economy with a public
  good: An example.
\newblock {\em Journal of Economic Theory}, 4(2):241--255, 1972.

\bibitem{Papadaki21}
Afroditi Papadaki, Natalia Mart{\'{\i}}nez, Mart{\'{\i}}n Bertr{\'{a}}n,
  Guillermo Sapiro, and Miguel R.~D. Rodrigues.
\newblock Federating for learning group fair models.
\newblock {\em CoRR}, abs/2110.01999, 2021.

\bibitem{TimPlaut18}
Benjamin Plaut and Tim Roughgarden.
\newblock Almost envy-freeness with general valuations.
\newblock {\em {SIAM} J. Discret. Math.}, 34(2):1039--1068, 2020.

\bibitem{scikitlogistic}
scikit-learn developers.
\newblock 1.1. linear models — scikit-learn 1.1.0 documentation.
\newblock
  \url{https://scikit-learn.org/stable/modules/linear_model.html#logistic-regression},
  2022.

\bibitem{ShiYL21}
Yuxin Shi, Han Yu, and Cyril Leung.
\newblock A survey of fairness-aware federated learning.
\newblock {\em CoRR}, abs/2111.01872, 2021.

\bibitem{SongYS}
Tianshu Song, Yongxin Tong, and Shuyue Wei.
\newblock Profit allocation for federated learning.
\newblock In {\em 2019 IEEE International Conference on Big Data (Big Data)},
  pages 2577--2586, 2019.

\bibitem{Steinhaus48}
Hugo Steinhaus.
\newblock The problem of fair division.
\newblock {\em Econometrica}, 16:101--104, 1948.

\bibitem{core-wiki}
Wikipedia.
\newblock Core (game theory) - wikipedia.
\newblock \url{https://en.wikipedia.org/wiki/Core_(game_theory)}, 2022.

\bibitem{XuL20}
Xinyi Xu and Lingjuan Lyu.
\newblock Towards building a robust and fair federated learning system.
\newblock {\em CoRR}, abs/2011.10464, 2020.

\bibitem{yang2021federated}
Miao Yang, Ximin Wang, Hongbin Zhu, Haifeng Wang, and Hua Qian.
\newblock Federated learning with class imbalance reduction.
\newblock In {\em 2021 29th European Signal Processing Conference (EUSIPCO)},
  pages 2174--2178. IEEE, 2021.

\bibitem{ZafarVGG17}
Muhammad~Bilal Zafar, Isabel Valera, Manuel Gomez{-}Rodriguez, and Krishna~P.
  Gummadi.
\newblock Fairness beyond disparate treatment {\&} disparate impact: Learning
  classification without disparate mistreatment.
\newblock In {\em {WWW}}, pages 1171--1180. {ACM}, 2017.

\bibitem{ZengCL21}
Yuchen Zeng, Hongxu Chen, and Kangwook Lee.
\newblock Improving fairness via federated learning.
\newblock {\em CoRR}, abs/2110.15545, 2021.

\bibitem{ZhangKLWWLSX21}
Fengda Zhang, Kun Kuang, Yuxuan Liu, Chao Wu, Fei Wu, Jiaxun Lu, Yunfeng Shao,
  and Jun Xiao.
\newblock Unified group fairness on federated learning.
\newblock {\em CoRR}, abs/2111.04986, 2021.

\end{thebibliography}

\section*{Checklist}


\begin{enumerate}

\item For all authors...
\begin{enumerate}
  \item Do the main claims made in the abstract and introduction accurately reflect the paper's contributions and scope?
    \answerYes{}
  \item Did you describe the limitations of your work?
    \answerYes{We clearly specify the assumptions and application scopes, if exists, of all our results.}
  \item Did you discuss any potential negative societal impacts of your work?
    \answerYes{We discuss broader impacts in \Cref{sec:broader}.}
  \item Have you read the ethics review guidelines and ensured that your paper conforms to them?
    \answerYes{}
\end{enumerate}

\item If you are including theoretical results...
\begin{enumerate}
  \item Did you state the full set of assumptions of all theoretical results?
    \answerYes{}
        \item Did you include complete proofs of all theoretical results?
    \answerYes{We include complete proofs for all results in the appendix.}
\end{enumerate}

\item If you ran experiments...
\begin{enumerate}
  \item Did you include the code, data, and instructions needed to reproduce the main experimental results (either in the supplemental material or as a URL)?
    \answerYes{}
  \item Did you specify all the training details (e.g., data splits, hyperparameters, how they were chosen)?
    \answerYes{We include them in \Cref{sec:exp-data}}
        \item Did you report error bars (e.g., with respect to the random seed after running experiments multiple times)?
    \answerYes{Indeed, we rigorously compute the confidence intervals due to the finite sampling~(see \Cref{sec:exp-result}), and guarantee that all fairness certificates reported in the paper holds with probability $\ge 90\%$.}
        \item Did you include the total amount of compute and the type of resources used (e.g., type of GPUs, internal cluster, or cloud provider)?
    \answerYes{We provide them in \Cref{sec:exp-data}}
\end{enumerate}

\item If you are using existing assets (e.g., code, data, models) or curating/releasing new assets...
\begin{enumerate}
  \item If your work uses existing assets, did you cite the creators?
    \answerYes{}
  \item Did you mention the license of the assets?
    \answerYes{}
  \item Did you include any new assets either in the supplemental material or as a URL?
    \answerYes{}
  \item Did you discuss whether and how consent was obtained from people whose data you're using/curating?
    \answerNo{We only use public data.}
  \item Did you discuss whether the data you are using/curating contains personally identifiable information or offensive content?
    \answerNo{We only use public data.}
\end{enumerate}

\item If you used crowdsourcing or conducted research with human subjects...
\begin{enumerate}
  \item Did you include the full text of instructions given to participants and screenshots, if applicable?
    \answerNA{}
  \item Did you describe any potential participant risks, with links to Institutional Review Board (IRB) approvals, if applicable?
    \answerNA{}
  \item Did you include the estimated hourly wage paid to participants and the total amount spent on participant compensation?
    \answerNA{}
\end{enumerate}

\end{enumerate}


\newpage


\appendix


\newpage

\section{Broader Impact}
\label{sec:broader}
This paper aims to provide a fair federated learning algorithm to guarantee that the utilities of the trained agents are core-stable fair. We do not expect the work to have any ethics issues or negative social impact if it is correctly used. On the other hand, if our evaluation and theory is misused, there could be potential negative social impact. For instance, our fairness metrics cannot indicate other accuracy or loss utilities and people need to evaluate federated learning algorithms with different utility metrics, rather than only using our metrics. We expect that our work will provide a way to measure and achieve fairness for different federated learning paradigms.

\section{Missing proofs from Section~\ref{existencecorestable}} 
\subsection{Proof of Lemma~\ref{fixedpointtocore}}

\begin{proof}
    We prove by contradiction. Assume that there exists a $\theta' \in P$, and a $S \subseteq [n]$, such that $ (\lvert S \rvert / n) \cdot u_i(\theta') \geq u_i(\theta)$ for all $i \in S$ with at least one strict inequality. Then, we have $\frac{u_i(\theta')}{u_i(\theta)} \geq n/\lvert S \rvert$ for all $i \in S$ with at least one strict inequality, implying $\sum_{i \in [n]} \frac{u_i(\theta')}{u_i(\theta)} \geq \sum_{i \in S} \frac{u_i(\theta')}{u_i(\theta)} > n$. However, since $\theta \in \phi(\theta)$, we have  $\sum_{i \in [n]} \frac{u_i(\theta')}{u_i(\theta)} \leq \sum_{i \in [n]} \frac{u_i(\theta)}{u_i(\theta)} = n$, which is a contradiction. 
\end{proof}

\subsection{Proof of Lemma~\ref{closed-graph}}

\begin{proof}
    We need to show that for every sequence $(\theta)_i$ converging to $\theta$, and $(\gamma)_i$ converging to $\gamma$, such that $\gamma_i \in \phi(\theta_i)$ for all $i$, we have $\gamma \in \phi(\theta)$. We prove this by contradiction. Let us assume otherwise, $\gamma \notin \phi(\theta)$. Let $\gamma' \in \phi(\theta)$ and let $\delta = \frac{\sum_{i \in [n]} u_i(\gamma') / u_i(\theta)}{\sum_{i \in [n]} u_i(\gamma) / u_i(\theta)} > 1$.  We now make a technical claim about the utility functions of the agents.
    
    \begin{claim}
    \label{technical}
     For all $i \in [n]$, and $x,y$ such that $||x-y||_2 \leq \beta$, we have 
     \begin{enumerate}
         \item  $| u_i(x) - u_i(y)| \leq h(\beta)$ where $h \colon \mathbb {R}_{\geq 0}  \rightarrow \mathbb{R}_{\geq 0}$ is continuous increasing function with $h(0) = 0$, and 
         \item  for each $i \in [n]$, we have $ u_i(y) \cdot  h'(\beta)^{-1} \leq u_i(x) \leq u_i(y) \cdot h'(\beta)$, where $h'(\beta) = (1 + \frac{h(\beta)}{M \varepsilon})$ and $M = \mathit{min}_{i \in [n]} M_i$. 
     \end{enumerate}
    
    \end{claim}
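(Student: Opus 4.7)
The plan is to derive part 1 from the uniform continuity of each utility on the compact domain $P$, and then to deduce part 2 as a short algebraic consequence using the uniform positive lower bound $u_i(\cdot)\ge M\varepsilon$ recorded in Section~\ref{Problemstatement}.

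For part 1, since $P$ is non-empty and compact and each $u_i$ is continuous by standing assumption, each $u_i$ is uniformly continuous on $P$. Hence for each $i$ one can pick a continuous nondecreasing modulus of continuity $\omega_i\colon[0,\mathrm{diam}(P)]\to\mathbb{R}_{\ge 0}$ with $\omega_i(0)=0$ and $|u_i(x)-u_i(y)|\le\omega_i(\|x-y\|_2)$; a convenient concrete choice is the least concave majorant of the canonical supremum modulus $\sup\{|u_i(x)-u_i(y)|:x,y\in P,\ \|x-y\|_2\le\delta\}$. I would then set $h(\delta):=\max_{i\in[n]}\omega_i(\delta)+\delta$, where the summand ``$+\delta$'' simply ensures strict monotonicity in case ``increasing'' is read strictly. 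The resulting $h$ is continuous, strictly increasing, satisfies $h(0)=0$, and fulfills $|u_i(x)-u_i(y)|\le h(\|x-y\|_2)\le h(\beta)$ for every $i$ whenever $\|x-y\|_2\le\beta$.

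For part 2, I would combine part 1 with the uniform positive lower bound $u_i(z)\ge M_i\varepsilon\ge M\varepsilon$, which follows directly from the definition in Equation~\eqref{eq:utility} and the choice $M=\min_i M_i$. Concretely,
$$
u_i(y)\cdot h'(\beta)=u_i(y)+\frac{u_i(y)}{M\varepsilon}\,h(\beta)\ge u_i(y)+h(\beta)\ge u_i(x),
$$
which is the upper bound. Interchanging the roles of $x$ and $y$ in the identical chain delivers $u_i(x)\cdot h'(\beta)\ge u_i(y)$, i.e.\ the matching lower bound $u_i(x)\ge u_i(y)\cdot h'(\beta)^{-1}$.

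The only non-trivial step is producing a single common modulus $h$ in part 1 with all three regularity properties (continuity, monotonicity, vanishing at zero) simultaneously; this is handled by the standard concave-majorant trick and is where a small amount of care is required, because Lemma~\ref{closed-graph} will subsequently evaluate $h$ along a convergent sequence of radii and exploit that $h(\beta_n)\to 0$ as $\beta_n\to 0$. Once $h$ is in hand, part 2 reduces to the one-line computation above using only the positive lower bound on the utilities.
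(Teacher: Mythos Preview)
Your proposal is correct and follows essentially the same approach as the paper. For part~1 the paper simply asserts that it ``follows immediately from the continuity of the utility functions,'' whereas you supply the extra rigor (uniform continuity on the compact $P$, a common modulus via $\max_i$, the $+\delta$ trick for strict monotonicity); for part~2 your one-line chain $u_i(y)h'(\beta)=u_i(y)+\tfrac{u_i(y)}{M\varepsilon}h(\beta)\ge u_i(y)+h(\beta)\ge u_i(x)$ is exactly the paper's computation written in the reverse direction, invoking the same lower bound $u_i(y)\ge M_i\varepsilon\ge M\varepsilon$.
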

    \begin{proof}
        Claim (1) follows immediately from the continuity of the utility functions.
        
        For claim (2), we have 
        \begin{align*}
            u_i(x) &\leq u_i(y) + h(\beta)\\
                   &\leq u_i(y) \cdot (1 + \frac{h(\beta)}{u_i(y)})\\
                   &\leq u_i(y) \cdot (1 + \frac{h(\beta)}{M_i \varepsilon}) &(u_i(y) \geq M_i \varepsilon)\\
                   &\leq u_i(y) \cdot (1 + \frac{h(\beta)}{M \varepsilon}) &(M_i \geq M )\\
                   &\leq u_i(y) \cdot h'(\beta). 
        \end{align*}
        In a similar way, we can prove that $u_i(y) \leq u_i(x) \cdot h'(\beta)$, which would then imply that $u_i(x) \geq u_i(y) \cdot  (h'(\beta))^{-1}$.
        \end{proof}
     We choose a $\delta'$ such that $h'(\delta')^3 =  (1 + \frac{h(\delta')}{M \varepsilon} )^3 \ll \delta$. Such a $\delta'$ exists as $h()$ is a continuous increasing function with $h(0) = 0$, and $\delta > 1$.  Since the sequences $(\theta)_i$ and $(\gamma)_i$ converges to $\theta$ and $\gamma$ respectively, there exists a $n' \in \mathbb{N}$ such that for all $\ell \geq  n'$, we have $||\gamma_{\ell} - \gamma||_2 < \delta' $ and $||\theta_{\ell} - \theta||_2 < \delta' $. Now observe that 
    \begin{align*}
        \sum_{i \in [n]} \frac{u_i(\gamma')}{u_i(\theta_{\ell})} &\geq  h'(\delta') ^{-1} \cdot \sum_{i \in [n]} \frac{u_i(\gamma')}{u_i(\theta)} &\text{(by Claim~\ref{technical})} \\
                                                                 &= h'(\delta')^{-1} \cdot \delta \cdot \sum_{i \in [n]} \frac{u_i(\gamma)}{u_i(\theta)} &\text{(by definition of $\delta$)}\\
                                                                 &\geq h'(\delta)^{-2} \cdot \delta \cdot \sum_{i \in [n]} \frac{u_i(\gamma_{\ell})}{u_i(\theta)} &\text{(by Claim~\ref{technical})}\\
                                                                 &\geq h'(\delta)^{-3} \cdot \delta \cdot \sum_{i \in [n]} \frac{u_i(\gamma_{\ell})}{u_i(\theta_{\ell})} &\text{(by Claim~\ref{technical})}\\
                                                                 & > \sum_{i \in [n]} \frac{u_i(\gamma_{\ell})}{u_i(\theta_{\ell})} &\text{(as $\delta \gg h'(\delta')^3$)}.
     \end{align*}
    This shows that $\gamma_{\ell} \notin \phi(\theta_{\ell})$, which is a contradiction.
\end{proof}

\section{Missing Proofs from Section~\ref{corestableconvex}}
\label{app:core-stable-concave}

\subsection{Proof of Theorem~\ref{core-stability-existence}}

\begin{proof}
 We first show that for any other predictor $\theta' \in P$, we have $\sum_{i \in [n]} \frac{u_i(\theta')}{u_i(\theta^*)} \leq n$. Consider any other predictor $\theta' \in P$. Since $P$ is convex, we have $(\nabla_{\theta} \mathcal L(\theta^*))^T (\theta' - \theta^*) < 0$. Now, observe that
 \begin{align*}
     \sum_{i \in [n]}\frac{u_i(\theta')}{u_i(\theta^*)} - n  &=\sum_{i \in [n]} \frac{u_i(\theta') - u_i(\theta^*)}{u_i(\theta)}\\
     &\leq \sum_{i \in [n]} \frac{ (\nabla u_i(\theta^*) )^\T (\theta'-\theta^*)}{u_i(\theta^*)} &\text{(from concavity of $u_i()$)}\\
     &=\sum_{i \in [n]} \sum_{j \in [d]} \Big( \frac{\partial u_i(\theta^*)}{\partial \theta_j } \cdot (\theta'_j - \theta^*_j) \cdot \frac{1}{u_i(\theta^*)} \Big)\\ 
     &=\sum_{i \in [n]} \frac{1}{u_i(\theta^*)} \cdot \sum_{j \in [d]}  \Big(\frac{\partial u_i(\theta^*)}{\partial \theta_j } \cdot (\theta'_j - \theta^*_j) \Big)\\
     &=\sum_{j \in [d]} (\theta'_j - \theta^*_j) \cdot  \sum_{i \in [n]}  \Big( \frac{1}{u_i(\theta^*)} \cdot \frac{\partial u_i(\theta^*)}{\partial \theta_j} \Big)\\ 
     &=(\nabla_{\theta} \mathcal L(\theta^*))^\T(\theta'-\theta^*) < 0.
 \end{align*}
 Now if $\theta^*$ is not core-stable, then there exists an $S \subseteq [n]$ and $\theta'\in P$, such that $u_i(\theta') \ge n/|S| u_i(\theta^*)$ for all $i \in S$ with at least one strict inequality, then we have $\sum_{i \in [n]} \frac{u_i(\theta)}{u_i(\theta')} \geq \sum_{i \in S} \frac{u_i(\theta)}{u_i(\theta')} > n/|S| \cdot |S|  = n$, which is a contradiction.  
 \end{proof}

\section{Algorithm \core}

Here we present the full description of~\core in Algorithm \ref{alg:training-protocol}.

\begin{algorithm}[t!]
\caption{\core  Distributed Training Protocol.}
\label{alg:training-protocol}
\KwIn{Number of clients $K$, number of rounds $T$, epochs $E$, learning rate $\eta$}
\KwOut{Model weights $\theta^T$}

\For {$t=0,1,\cdots,T-1$} {
    Server selects a subset of $K$ devices $S_t$\;
    Server sends weights $\theta^t$ to all selected devices\;
    Each select device $s\in S_t$ updates $\theta^t$ for $E$ epochs of SGD with learning rate $\eta$ to obtain new weights $\bar\theta^{t}_s$\;
    Each select device $s\in S_t$ computes
    \begin{equation*}
        \begin{aligned}
            & \Delta \theta_s^t = \bar\theta^{t}_s - \theta^t, \\
            & \gL_s^t = \dfrac{1}{|\gD_s|} \sum_{i=1}^{|\gD_s|} \ell(f_{\theta^t}(x_s^{(i)}), y_s^{(i)})
        \end{aligned}
    \end{equation*}
    where $\gD_s = \{(x_s^{(i)}, y_s^{(i)}): 1 \le i \le |\gD_s|\}$ is the training dataset on device $s$\;
    Each selected device $s \in S_t$ sends $\Delta \theta_s$ and $\gL_s$ back to the server\;
    Server updates $\theta^{t+1}$ following
    \begin{equation*}
        \theta^{t+1} \gets \theta^t + \dfrac{1}{|S_t|} \sum_{s\in S_t} \dfrac{\Delta \theta_s^t}{M_s - \gL_s^t}. 
    \end{equation*}
}
\end{algorithm}
\section{Missing Proofs from Section~\ref{dnncore}}
\label{adxsec:pf-thm3}

\subsection{Proof of Theorem~\ref{pseudo-core-stability-dnn}}
    
    \begin{proof}
        For any $\theta'$ such that $||\theta - \theta'||_2 \le d$, according to the definition of $\beta$-smooth, we have
        \begin{equation*}
            u_i(\theta') \le u_i(\theta) + \nabla_\theta u_i(\theta)^\T (\theta'-\theta) + \frac{\beta}{2} ||\theta'-\theta||_2^2.
        \end{equation*}
        Then we observe that
        \begin{equation*}
            \begin{aligned}
                & \sum_{i\in [n]} \dfrac{u_i(\theta')}{u_i(\theta)} - kn \\
                = & \sum_{i\in [n]} \dfrac{u_i(\theta') - u_i(\theta)}{u_i(\theta)} - (k-1)n \\
                \le & \sum_{i\in [n]} \dfrac{\nabla_\theta u_i(\theta)^\T (\theta'-\theta) + \frac{\beta}{2} ||\theta'-\theta||_2^2}{u_i(\theta)} - (k-1)n \\
                = & (\nabla \gL(\theta))^\T (\theta'-\theta) + \sum_{i\in [n]} \dfrac{\beta}{2u_i(\theta)} ||\theta' - \theta||_2^2 - (k-1)n \\
                \le & \epsilon ||\theta' - \theta||_2 + \sum_{i\in [n]} \dfrac{\beta}{2u_i(\theta)} ||\theta' - \theta||_2^2 - (k-1)n.
            \end{aligned}
        \end{equation*}
        By plugging in the RHS of \Cref{eqn:pseudo-core-stability-dnn}, we observe that when $||\theta'-\theta||_2 < d$, $\sum_{i\in [n]} \frac{u_i(\theta')}{u_i(\theta)} - kn < 0$.
        
        On the other hand,
        suppose for any $S \subseteq [n]$,
        if for all $i\in S$ we have $\dfrac{|S|}{kn} u_i(\theta') \ge u_i(\theta)$,
        then
        \begin{equation}
            \begin{aligned}
                \sum_{i\in [n]} \dfrac{u_i(\theta')}{u_i(\theta)} 
                = \sum_{i\in S} \dfrac{u_i(\theta')}{u_i(\theta)}
                + \sum_{i\in [n]\setminus S} \dfrac{u_i(\theta')}{u_i(\theta)}
                \ge \sum_{i\in S} \dfrac{u_i(\theta')}{u_i(\theta)}
                \ge \sum_{i\in S} \dfrac{kn}{|S|} \ge kn
            \end{aligned}
        \end{equation}
        which contradicts the above result and concludes the proof.
    \end{proof}

\end{document}